\newcommand{\norm}[1]{\left\lVert#1\right\rVert}
\newtheorem{defi}{Definition}
\newtheorem{thm}{Theorem}
\newtheorem{prop}{Proposition}
\newcommand{\E}{\mathbf{E}}
\newcommand{\EE}{\mathbb{E}}
\title{Central Moment Discrepancy (CMD) for Domain-Invariant Representation Learning}
\author{Werner Zellinger, Edwin Lughofer \& Susanne Saminger-Platz\thanks{http://www.flll.jku.at} \\
Department of Knowledge-Based Mathematical Systems \\
Johannes Kepler University Linz, Austria \\
\texttt{\{werner.zellinger, edwin.lughofer, susanne.saminger-platz\}@jku.at} \\
\And
Thomas Grubinger \& Thomas Natschl\"ager\thanks{http://www.scch.at} \\
Data Analysis Systems \\
Software Competence Center Hagenberg, Austria \\
\texttt{\{thomas.grubinger, thomas.natschlaeger\}@scch.at} \\
}
\definecolor{Gray}{gray}{0.9}
\newcommand{\distas}[1]{\mathbin{\overset{#1}{\kern\z@\sim}}}%
\newsavebox{\mybox}\newsavebox{\mysim}
\newcommand{\distras}[1]{%
	\savebox{\mybox}{\hbox{\kern3pt$\scriptstyle#1$\kern3pt}}%
	\savebox{\mysim}{\hbox{$\sim$}}%
	\mathbin{\overset{#1}{\kern\z@\resizebox{\wd\mybox}{\ht\mysim}{$\sim$}}}%
}
\begin{document}

\maketitle
\frenchspacing

\begin{abstract}
The learning of domain-invariant representations in the context of {\em domain adaptation} with neural networks is considered.
We propose a new regularization method that minimizes the discrepancy between domain-specific latent feature representations directly in the hidden activation space.
Although some standard distribution matching approaches exist that can be interpreted as the matching of weighted sums of moments, e.g. Maximum Mean Discrepancy, an explicit order-wise matching of higher order moments has not been considered before.
We propose to match the higher order central moments of probability distributions by means of order-wise moment differences.
Our model does not require computationally expensive distance and kernel matrix computations.
We utilize the equivalent representation of probability distributions by moment sequences to define a new distance function, called Central Moment Discrepancy (CMD).
We prove that CMD is a metric on the set of probability distributions on a compact interval.
We further prove that convergence of probability distributions on compact intervals w.\,r.\,t. the new metric implies convergence in distribution of the respective random variables.
We test our approach on two different benchmark data sets for object recognition (Office) and sentiment analysis of product reviews (Amazon reviews).
CMD achieves a new state-of-the-art performance on most domain adaptation tasks of Office and outperforms networks trained with Maximum Mean Discrepancy, Variational Fair Autoencoders and Domain Adversarial Neural Networks on Amazon reviews.
In addition, a post-hoc parameter sensitivity analysis shows that the new approach is stable w.\,r.\,t. parameter changes in a certain interval.
The source code of the experiments is publicly available\footnote{https://github.com/wzell/cmd}.
\end{abstract}

\section{Introduction}



%


The collection and preprocessing of large amounts of data for new domains is often time consuming and expensive. This in turn limits the application of state-of-the-art methods like deep neural network architectures, that require large amounts of data.
However, often data from related domains can be used to improve the prediction model in the new domain. This paper addresses the particularly important and challenging domain-invariant representation learning task of unsupervised domain adaptation~\citep{glorot2011domain,li2014unsupervised,pan2011domain,ganin2016domain}. In unsupervised domain adaptation, the training data consists of labeled data from the source domain(s) and unlabeled data from the target domain. In practice, this setting is quite common, as in many applications the collection of input data is cheap, but the collection of labels is expensive. Typical examples include image analysis tasks and sentiment analysis, where labels have to be collected manually.

Recent research shows that domain adaptation approaches work particularly well with (deep) neural networks, which produce outstanding results on some domain adaptation data sets~\citep{ganin2016domain,sun2016deep,li2016revisiting,aljundi2015landmarks,long2015learning,li2015generative,zhuang2015supervised,louizos2015variational}.
The most successful methods have in common that they encourage similarity between the latent network representations w.\,r.\,t. the different domains.
This similarity is often enforced by minimizing a certain distance between the networks' domain-specific hidden activations.
Three outstanding approaches for the choice of the distance function are the {\it Proxy }$\mathcal{A}$-{\it distance}~\citep{ben2010theory}, the {\it Kullback-Leibler} (KL) {\it divergence} \cite{KullbackLeibler51}, applied to the mean of the activations~\citep{zhuang2015supervised}, and the {\it Maximum Mean Discrepancy} ~\citep[MMD]{gretton2006kernel}.

Two of them, the MMD and the KL-divergence approach, can be viewed as the matching of statistical moments.
The KL-divergence approach is based on mean (first raw moment) matching.
Using the Taylor expansion of the Gaussian kernel, most MMD-based approaches can be viewed as minimizing a certain distance between weighted sums of all raw moments~\citep{li2015generative}.

The interpretation of the KL-divergence approaches and MMD-based approaches as moment matching procedures motivate us to match the higher order moments of the domain-specific activation distributions directly in the hidden activation space.
The matching of the higher order moments is performed explicitly for each moment order and each hidden coordinate.
Compared to KL-divergence-based approaches, which only match the first moment, our approach also matches higher order moments.
In comparison to MMD-based approaches, our method explicitly matches the moments for each order, and it does not require any computationally expensive distance- and kernel matrix computations.

The proposed distribution matching method induces a metric between probability distributions.
This is possible since distributions on compact intervals have an equivalent representation by means of their moment sequences.
We utilize central moments due to their translation invariance and natural geometric interpretation.
We call the new metric Central Moment Discrepancy (CMD).

The contributions of this paper are as follows:
\begin{itemize}
	\item We propose to match the domain-specific hidden representations by explicitly minimizing differences of higher order central moments for each moment order. We utilize the equivalent representation of probability distributions by moment sequences to define a new distance function, which we call Central Moment Discrepancy (CMD).
	\item Probability theoretic analysis is used to prove that CMD is a metric on the set of probability distributions on a compact interval.
	\item We additionally prove that convergence of probability distributions on compact intervals w.\,r.\,t. to the new metric implies convergence in distribution of the respective random variables.
	This means that minimizing the CMD metric between probability distributions leads to convergence of the cumulative distribution functions of the random variables.
	\item In contrast to MMD-based approaches our method does not require computationally expensive kernel matrix computations.
	\item We achieve a new state-of-the-art performance on most domain adaptation tasks of Office and outperform networks trained with MMD, variational fair autoencoders and domain adversarial neural networks on Amazon reviews.
	\item A parameter sensitivity analysis shows that CMD is insensitive to parameter changes within a certain interval. Consequently, no additional hyper-parameter search has to be performed.
\end{itemize}

\section{Hidden Activation Matching}

We consider the {\it unsupervised domain adaptation} setting~\citep{glorot2011domain,li2014unsupervised,pan2011domain,ganin2016domain} with an input space $\cal X$ and a label space $\cal Y$. Two distributions over $\mathcal{X}\times \mathcal{Y}$ are given: the {\it labeled source domain} $D_S$ and the {\it unlabeled target domain} $D_T$. Two corresponding samples are given: the {\it source sample} $S=(X_S,Y_S)=\{(x_i, y_i)\}_{i=1}^n \overset{\text{i.i.d.}}{\sim} (D_S)^n$ and the {\it target sample} $T=X_T=\{x_i\}_{i=1}^m \overset{\text{i.i.d.}}{\sim} (D_T)^m$. The goal of the unsupervised domain adaptation setting is to build a classifier $f:\mathcal{X}\rightarrow \mathcal{Y}$ with a low target risk $R_T(f)=\underset{(x, y)\sim D_T}{\text{Pr}}(f(x)\neq y)$, while no information about the labels in $D_T$ is given.

We focus our studies on neural network classifiers $f_\theta:\mathcal{X}\rightarrow \mathcal{Y}$ with parameters $\theta\in \Theta$, the input space $\mathcal{X}=\mathbb{R}^I$ with input dimension $I$, and the label space $\mathcal{Y}=[0,1]^{|C|}$ with the cardinality $|C|$ of the set of classes $C$. We further assume a network output $f_\theta(x)\in[0,1]^{|C|}$ of an example $x\in \mathbb{R}^I$ to be normalized by the softmax-function $\sigma:\mathbb{R}^{|C|}\rightarrow [0,1]^{|C|}$ with $\sigma(z)_j=\frac{e^{z_j}}{\sum_{k=1}^{|C|} e^{z_k}}$ for $z=\{z_1,\ldots,z_{|C|}\}$.
We focus on bounded activation functions $g_{H}:\mathbb{R}\rightarrow [a,b]^N$ for the hidden layer $H$ with $N$ hidden nodes, e.g. the hyperbolic tangent or the sigmoid function.
Unbounded activation functions, e.g. rectified linear units or exponential linear units, can be used if the output is clipped or normalized to be bounded.
Using the loss function $l:\Theta\times \mathcal{X}\times \mathcal{Y}\rightarrow \mathbb{R}$, e.g. cross-entropy $l(\theta, x, y)=-\sum_{i\in C} y_i \log(f_\theta(x)_i)$, and the sample set $(X,Y) \subset \mathbb{R}^I\times [0,1]^{|C|}$, we define the objective function as
\begin{equation}
	\label{eq:obj_base}
	\underset{\theta\in\Theta}{\min}~~\E(l(\theta, X, Y))
\end{equation}
where $\E$ denotes the empirical expectation, i.e. $\E(l(\theta, X, Y))=\frac{1}{|(X,Y)|} \sum_{(x,y)\in(X,Y)} l(\theta, x, y)$.
Let us denote the {\it source hidden activations} by $A_{H}(\theta,X_S)=g_{H}(\theta_{H}^T A_{H'}(\theta,X_S))\subset [a,b]^{N}$ and the {\it target hidden activations} by $A_{H}(\theta,X_T)=g_{H}(\theta_{H}^T A_{H'}(\theta,X_T))\subset [a,b]^{N}$ for the hidden layer $H$ with $N$ hidden nodes and parameter $\theta_{H}$, and the hidden layer $H'$ before $H$.

One fundamental assumption of most unsupervised domain adaptation networks is that the source risk $R_S(f)$ is a good indicator for the target risk $R_T(f)$, when the domain-specific latent space representations are similar~\citep{ganin2016domain}.
This similarity can be enforced by matching the distributions of the hidden activations $A_{H}(\theta,X_S)$ and $A_{H}(\theta,X_T)$ of higher layers $H$.
Recent state-of-the-art approaches define a domain regularizer $d:([a,b]^{N})^n\times ([a,b]^{N})^m\rightarrow [0,\infty)$, which gives a measure for the domain discrepancy in the activation space $[a,b]^N$. The domain regularizer is added to the objective by means of an additional weighting parameter $\lambda$.
\begin{equation}
	\label{eq:objective}
	\underset{\theta\in\Theta}{\min}~~~~\E(l(\theta, X_S, Y_S)) + \lambda \cdot d(A_{H}(\theta,X_S), A_{H}(\theta,X_T))
\end{equation}

\begin{figure}
	\centering
	\includegraphics[width=1\textwidth]{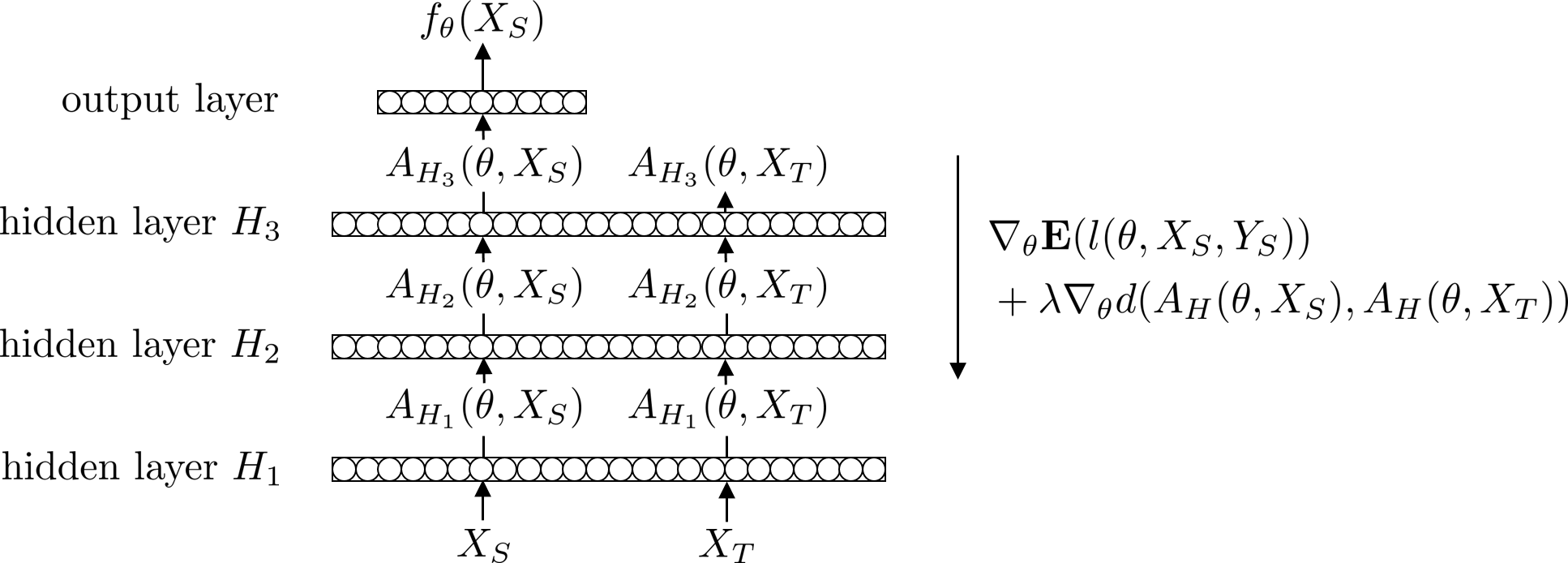}
	\caption[Architecture]{Schematic sketch of a three layer neural network trained with backpropagation based on objective~(\ref{eq:objective}). $\nabla_\theta$ refers to the gradient w.\,r.\,t. $\theta$.}
	\label{fig:architecture}
\end{figure}

Fig.~\ref{fig:architecture} shows a sketch of the described architecture and fig.~\ref{fig:activations} shows the hidden activations of a simple neural network optimized by eq.~(\ref{eq:obj_base}) (left) and eq.~(\ref{eq:objective}) (right). It can be seen that similar activation distributions are obtained when being optimized on the basis of the domain regularized objective.


\begin{figure}
	\centering
	\begin{subfigure}[b]{0.45\textwidth}
		\includegraphics[width=\textwidth]{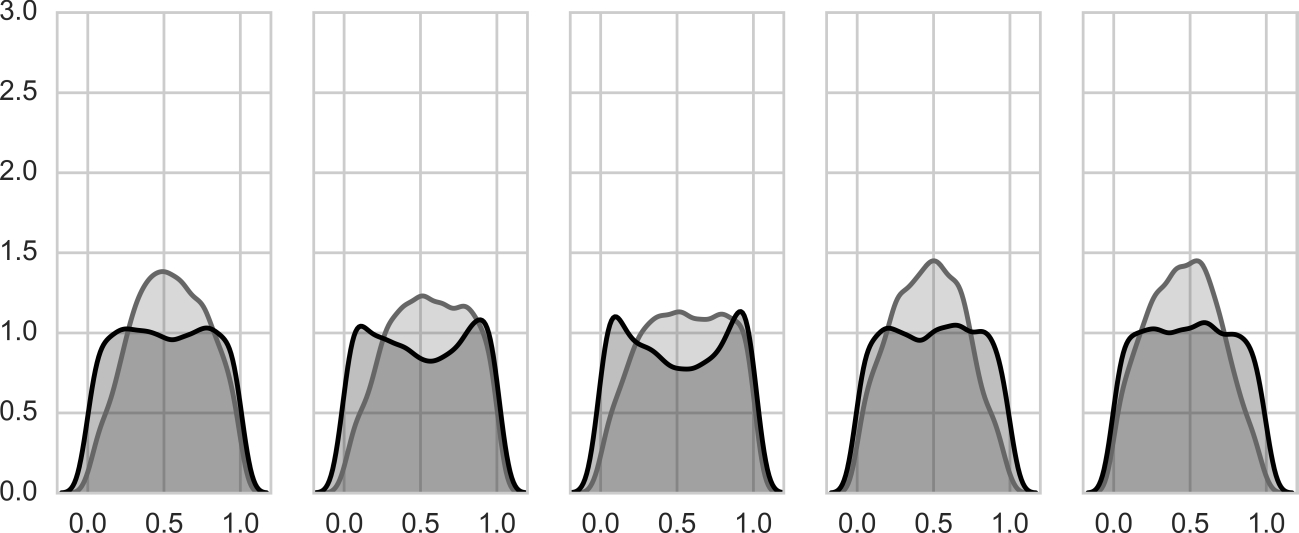}
	\end{subfigure}
	\qquad
	\begin{subfigure}[b]{0.45\textwidth}
		\includegraphics[width=\textwidth]{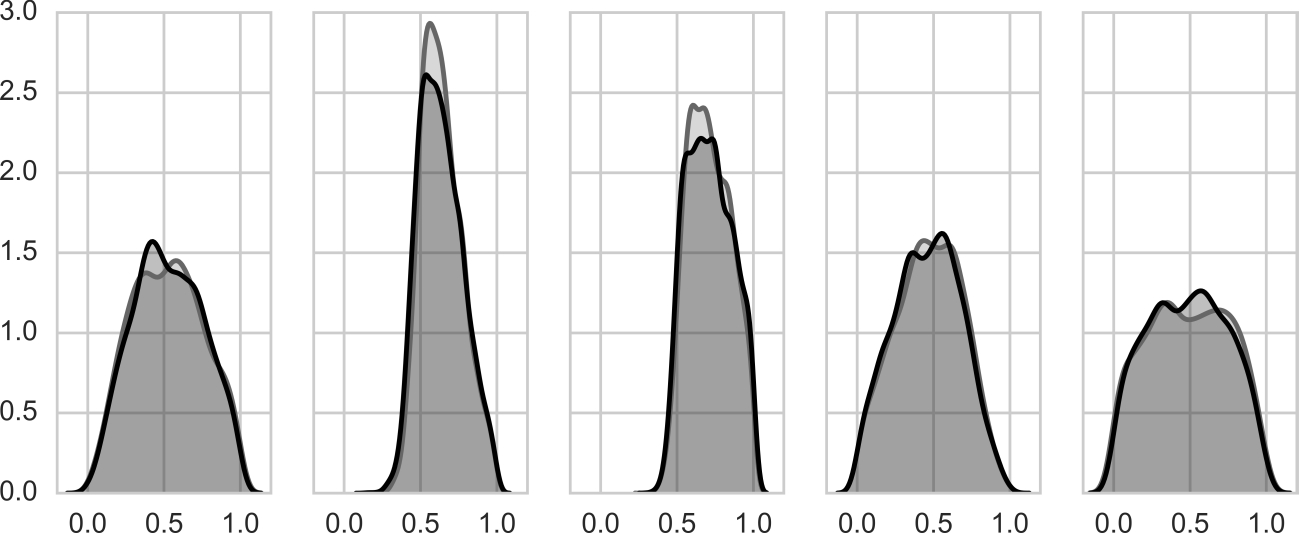}
	\end{subfigure}
	\caption[Hidden activations with domain regularization]{Hidden activation distributions for a simple one-layer classification network with sigmoid activation functions and five hidden nodes trained with the standard objective~(\ref{eq:obj_base}) (left) and objective~(\ref{eq:objective}) that includes the domain discrepancy minimization (right). The approach of this paper was used as domain regularizer. Dark gray: activations of the source domain, light gray: activations of the target domain.}
	\label{fig:activations}
\end{figure}

\section{Related Work}

Recently, several measures $d$ for objective~(\ref{eq:objective}) have been proposed. One approach is the {\it Proxy }$\mathcal{A}$-{\it distance}, given by $\hat{d}_\mathcal{A}=2(1-2\epsilon)$, where $\epsilon$ is the generalization error on the problem of discriminating between source and target samples~\citep{ben2010theory}. \cite{ganin2016domain} compute the value $\epsilon$ with a neural network classifier that is simultaneously trained with the original network by means of a gradient reversal layer. They call their approach {\it domain-adversarial neural networks}. Unfortunately, a new classifier has to be trained in this approach including the need of new parameters, additional computation times and validation procedures.

Another approach is to make use of the MMD~\citep{gretton2006kernel} as domain regularizer.
\begin{equation}
	\text{MMD}(X,Y)^2=\E(K(X,X)) -
						2 \E(K(X,Y)) +
						\E(K(Y,Y))
\end{equation}
where $\E(K(X,Y))=\frac{1}{|X|\cdot|Y|}\sum_{k\in K(X,Y)} k$ is the empirical expectation of the kernel products $k$ between all examples in $X$ and $Y$ stored by the kernel matrix $K(X,Y)$. A suitable choice of the kernel seems to be the Gaussian kernel $e^{-\beta \|x-y\|^2}$~\citep{louizos2015variational, li2015generative, tzeng2014deep}.
This approach has two major drawbacks: (a) the need of tuning an additional kernel parameter $\beta$, and (b) the need of the kernel matrix computation $K(X,Y)$ (computational complexity $\mathcal{O}(n^2+n m+m^2)$), which becomes inefficient (resource-intensive) in case of large data sets.
Concerning (a), the tuning of $\beta$ is sophisticated since no target samples are available in the domain adaptation setting.
Suitable tuning procedures are transfer learning specific cross-validation methods~\citep{zhong2010cross}.
More general methods that don't utilize source labels include heuristics that are based on kernel space properties~\citep{sriperumbudur2009kernel,gretton2012kernel}, combinations of multiple kernels~\citep{li2015generative}, and kernel choices that maximize the MMD test power~\citep{sutherland2016generative}.
The drawback (b) of the kernel matrix computation can be handled by approximating the MMD~\citep{zhao2015fastmmd}, or by using linear time estimators~\citep{gretton2012kernel}.
In this work we focus on the quadratic-time MMD with the Gaussian kernel~\citep{gretton2012kernel,tzeng2014deep} and transfer learning specific cross-validation for parameter tuning~\citep{zhong2010cross,ganin2016domain}.


The two approaches MMD and the {\it Proxy }$\mathcal{A}$-{\it distance} have in common that they do not minimize the domain discrepancy explicitly in the hidden activation space.
In contrast, the authors in \cite{zhuang2015supervised} do so by minimizing a modified version of the {\it Kullback-Leibler} {\it divergence} of the mean activations (MKL). That is, for samples $X,Y\subset\mathbb{R}^N$,
\begin{equation}
	\text{MKL}(X,Y)= \sum_{i=1}^{N} \E(X)_i \log \frac{\E(X)_i}{\E(Y)_i} + \E(Y)_i \log \frac{\E(Y)_i}{\E(X)_i}
\end{equation}
with $\E(X)_i$ being the $i^\text{th}$ coordinate of the empirical expectation $\E(X)=\frac{1}{|X|}\sum_{x\in X} x$. This approach is fast to compute and has an explicit interpretation in the activation space.
Our empirical observations (section {\it Experiments}) show that minimizing the distance between only the first moment (mean) of the activation distributions can be improved by also minimizing the distance between higher order moments.

As noted in the introduction, our approach is motivated by the fact that the MMD and the KL-divergence approach can be seen as the matching of statistical moments of the hidden activations $A_{H}(\theta,X_S)$ and $A_{H}(\theta,X_T)$. In particular, MMD-based approaches that use the Gaussian kernel are equivalent to minimizing a certain distance between weighted sums of all moments of the hidden activation distributions~\citep{li2015generative}.

%

We propose to minimize differences of higher order central moments of the activations $A_{H}(\theta,X_S)$ and $A_{H}(\theta,X_T)$.
The difference minimization is performed explicitly for each moment order.
Our approach utilizes the equivalent representation of probability distributions in terms of its moment series.
We further utilize central moments due to their translation invariance and natural geometric interpretation.
Our approach contrasts with other moment-based approaches, as they either match only the first moment (MKL) or they don't explicitly match the moments for each order (MMD).
As a result, our approach improves over MMD-based approaches in terms of computational complexity with $\mathcal{O}\left(N(n+m)\right)$ for CMD and $\mathcal{O}\left(N(n^2+n m+m^2)\right)$ for MMD.
In contrast to MKL-based approaches more accurate distribution matching characteristics are obtained.
In addition, CMD achieves a new state-of-the-art performance on most domain adaptation tasks of Office and outperforms networks trained with MMD, variational fair autoencoders and domain adversarial neural networks on Amazon reviews.

\section{Central Moment Discrepancy (CMD)}

In this section we first propose a new distance function $\text{CMD}$ on probability distributions on compact intervals. The definition is extended by two theorems that identify $\text{CMD}$ as a metric and analyze a convergence property. The final domain regularizer is then defined as an empirical estimate of $\text{CMD}$. The proofs of the theorems are given in the appendix.\\

\begin{defi}[CMD metric]
	Let $X=(X_1,\ldots,X_n)$ and $Y=(Y_1,\ldots,Y_n)$ be bounded random vectors independent and identically distributed from two probability distributions $p$ and $q$ on the compact interval $[a,b]^N$. The central moment discrepancy metric ($\text{CMD}$) is defined by
	\begin{equation}
	\label{eq:cmm}
	\text{CMD}(p,q)=\frac{1}{|b-a|}\norm{\EE(X)-\EE(Y)}_2+\sum_{k= 2}^{\infty}\frac{1}{|b-a|^k}\norm{c_k(X)-c_k(Y)}_2
	\end{equation}
	where $\EE(X)$ is the expectation of $X$, and
	$$c_k(X)=\bigg(\EE\Big( \prod_{i=1}^{N} \left( X_i - \EE(X_i) \right)^{r_i} \Big)\bigg)_{\substack{r_1+\ldots+r_N=k\\r_1,\ldots,r_n\geq 0}}$$
	is the central moment vector of order $k$.
\end{defi}

The first order central moments are zero, the second order central moments are related to variance, and the third and fourth order central moments are related to the skewness and the kurtosis of probability distributions. It is easy to see that $\text{CMD}(p,q)\geq 0$, $\text{CMD}(p,q)=\text{CMD}(q,p)$, $\text{CMD}(p,q)\leq \text{CMD}(p,r)+\text{CMD}(r,q)$ and $p=q\Rightarrow\text{CMD}(p,q)=0$. The following theorem shows the remaining property for $\text{CMD}$ to be a metric on the set of probability distributions on a compact interval.

\begin{thm}
	\label{thm:metric}
	Let $p$ and $q$ be two probability distributions on a compact interval and let $\text{CMD}$ be defined as in~(\ref{eq:cmm}), then
	\begin{equation*}
		\text{CMD}(p,q)=0 ~~\Rightarrow~~ p=q
	\end{equation*}
\end{thm}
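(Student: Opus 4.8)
The plan is to show that $\text{CMD}(p,q)=0$ forces all central moments (and the mean) of $p$ and $q$ to agree, and then to invoke the classical moment problem on a compact interval, where a distribution is uniquely determined by its moment sequence. First I would observe that since $\text{CMD}(p,q)$ is a sum of nonnegative terms, $\text{CMD}(p,q)=0$ implies $\EE(X)=\EE(Y)$ and $c_k(X)=c_k(Y)$ for every $k\geq 2$. In particular, all mixed central moments $\EE\big(\prod_{i=1}^N (X_i-\EE(X_i))^{r_i}\big)$ coincide with the corresponding ones for $Y$, for every multi-index $(r_1,\ldots,r_N)$. Since the means agree as well, translating by the common mean shows that all raw mixed moments $\EE\big(\prod_i X_i^{r_i}\big)$ and $\EE\big(\prod_i Y_i^{r_i}\big)$ agree too (one expands $(X_i-\EE(X_i))^{r_i}$ by the binomial theorem and reorganizes; the means supply the lowest-order data).

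Next I would reduce the multivariate statement to a statement about distributions being determined by moments. Because $X$ and $Y$ are supported on the compact box $[a,b]^N$, every polynomial in the coordinates is bounded and continuous on the support, and the set of polynomials is dense in $C([a,b]^N)$ by the Stone--Weierstrass theorem. Hence equality of all mixed moments means $\int f\,dp = \int f\,dq$ for every $f\in C([a,b]^N)$, which by the Riesz representation theorem (uniqueness of the representing measure for a positive functional on a compact Hausdorff space) gives $p=q$. Equivalently, one may argue coordinate-wise via the Hausdorff moment problem together with a tensor/product argument, but the Stone--Weierstrass route is the cleanest and avoids having to handle joint distributions through marginals.

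The step I expect to require the most care is the bookkeeping that converts equality of \emph{central} moments of all orders plus equality of means into equality of all \emph{raw} moments. This is a finite, purely algebraic manipulation for each fixed multi-index --- essentially multinomial expansion of $\prod_i (X_i - \EE(X_i))^{r_i}$ and induction on the total order $k=r_1+\cdots+r_N$, using that all lower-order raw moments have already been matched and that $\EE(X_i)=\EE(Y_i)$ --- but it must be stated carefully so that no circularity creeps in. Once that is in place, the appeal to Stone--Weierstrass and Riesz representation is routine. (Note also that CMD as defined is a priori only formally a metric: the series in~(\ref{eq:cmm}) converges because on $[a,b]^N$ one has $\norm{c_k(X)-c_k(Y)}_2$ growing at most polynomially in $k$ while it is divided by $|b-a|^k$; this convergence is not needed for the present implication but is worth recording alongside the triangle inequality.)
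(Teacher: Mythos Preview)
Your argument is correct and reaches the same conclusion as the paper, but the route differs in two respects. First, the paper sidesteps your multinomial induction entirely by passing to the centered vectors $\hat X=X-\EE(X)$ and $\hat Y=Y-\EE(Y)$: since $\EE(X)=\EE(Y)$, the central moments of $X$ and $Y$ are precisely the \emph{raw} moments of $\hat X$ and $\hat Y$, so equality of all raw moments of the centered variables is immediate and no order-by-order bookkeeping is required. Second, where you invoke Stone--Weierstrass and Riesz representation to pass from moments to measures, the paper instead argues via moment generating functions: on a compact support the MGF is entire with Taylor coefficients given by the moments, so equal moment sequences force equal MGFs and hence equal laws; the shift by the common mean then yields $p=q$. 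Both devices are standard; your Stone--Weierstrass argument is arguably more self-contained, while the paper's centering trick is cleaner than your central-to-raw conversion. One small caution on your parenthetical: the claim that the CMD series converges because $\norm{c_k(X)-c_k(Y)}_2$ grows ``at most polynomially'' is not right as stated---the entries of $c_k$ can be of size $|b-a|^k$, and after normalization the crude bound on the $k$th term is of order $\sqrt{\binom{k+N-1}{N-1}}$, which is not summable. As you note, this does not affect the implication $\text{CMD}(p,q)=0\Rightarrow p=q$, since a sum of nonnegative terms vanishes only if every term does.
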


%

Our approach is to minimize the discrepancy between the domain-specific hidden activation distributions by minimizing the CMD.
Thus, in the optimization procedure, we increasingly expect to see the domain-specific cumulative distribution functions approach each other.
This characteristic can be expressed by the concept of {\it convergence in distribution} and it is shown in the following theorem.

\begin{thm}
	\label{thm:optim}
	Let $p_n$ and $p$ be probability distributions on a compact interval and let $\text{CMD}$ be defined as in~(\ref{eq:cmm}), then
	\begin{equation*}
		\text{CMD}(p_n,p)\rightarrow 0~~\Rightarrow~~ p_n\xrightarrow{d} p
	\end{equation*}
	where $\xrightarrow{d}$ denotes convergence in distribution.
\end{thm}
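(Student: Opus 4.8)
The plan is to reduce CMD convergence to convergence of all moments and then apply the classical method of moments, which is available precisely because the supports are confined to the compact set $[a,b]^N$. Write $X_n\sim p_n$ and $X\sim p$. First I would observe that every summand in~(\ref{eq:cmm}) is nonnegative, so $\text{CMD}(p_n,p)\to 0$ forces the mean term and, for each fixed order $k\ge 2$, the term $\frac{1}{|b-a|^k}\norm{c_k(X_n)-c_k(X)}_2$ to vanish individually (each is dominated by the full sum, and $|b-a|$ is a fixed positive constant; the degenerate case $a=b$ is trivial). Since a Euclidean norm tends to $0$ iff every coordinate does, this already gives $\EE(X_n)\to\EE(X)$ and convergence of every central moment: for each multi-index $(r_1,\dots,r_N)$ the corresponding coordinate of $c_k(X_n)$ converges to that of $c_k(X)$.

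Next I would promote this to convergence of every \emph{raw} moment $\EE\big(\prod_{i=1}^N (X_n)_i^{r_i}\big)$, arguing by induction on the order $k=r_1+\dots+r_N$. Expanding the product that defines $c_k$ with the binomial theorem and taking expectations writes each central moment of order $k$ as the corresponding raw moment of order $k$ plus a finite combination of terms, each built from binomial coefficients, powers of the coordinate means $\EE((X_n)_i)$, and raw moments of strictly smaller order. Solving for the top-order raw moment and inserting the inductive hypothesis, the convergence $\EE(X_n)\to\EE(X)$, and the just-established convergence of the order-$k$ central moment yields convergence of the order-$k$ raw moment; the cases $k=0,1$ are immediate. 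Consequently $\int P\,dp_n\to\int P\,dp$ for every monomial, hence for every polynomial $P$ on $[a,b]^N$.

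Finally I would pass from polynomials to arbitrary continuous test functions. Compactness of $[a,b]^N$ lets the Stone--Weierstrass theorem make polynomials dense in $C([a,b]^N)$, and because $p_n$ and $p$ are probability measures (mass $1$), a routine three-$\varepsilon$ estimate upgrades $\int P\,dp_n\to\int P\,dp$ to $\int f\,dp_n\to\int f\,dp$ for all $f\in C([a,b]^N)$; by the portmanteau characterization this is exactly $p_n\xrightarrow{d}p$. An equivalent finish would use characteristic functions: bounded supports make the Taylor tail of $\EE(e^{i\langle t,X_n\rangle})$ uniformly small in $n$, so moment convergence gives pointwise convergence of characteristic functions, and L\'evy's continuity theorem concludes.

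I do not expect a genuine obstacle; the substantive ingredient is the determinacy of the moment problem on a compact set, which is exactly why compactness cannot be dropped (on unbounded intervals moment convergence need not imply weak convergence). The most laborious point is the multivariate bookkeeping in the second step --- the passage between central and raw moments --- and one must be careful with quantifiers, since the CMD bound only delivers convergence order by order, which is precisely the hypothesis the method-of-moments argument needs.
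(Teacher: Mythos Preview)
Your argument is correct, but it takes a genuinely different route from the paper. The paper's proof centers the random vectors first, setting $\hat X_n=X_n-\EE(X_n)$ and $\hat X=X-\EE(X)$; then $\text{CMD}(p_n,p)\to 0$ immediately says that all (raw) moments of $\hat X_n$ converge to those of $\hat X$, so the joint moment generating functions of the centered variables converge, which on a compact support yields $\hat X_n\xrightarrow{d}\hat X$; finally $\EE(X_n)\to\EE(X)$ and a Slutsky-type shift give $X_n\xrightarrow{d}X$. By contrast, you stay with the original variables and do the central-to-raw conversion explicitly by induction on the moment order, then invoke Stone--Weierstrass on $C([a,b]^N)$ and a three-$\varepsilon$ estimate (or, equivalently, characteristic functions). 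The paper's route is shorter and sidesteps the multivariate binomial bookkeeping, but it leans on the (true, though not entirely trivial) passage from moment convergence to MGF convergence on compact supports and an implicit Slutsky step. Your route is more self-contained and makes the determinacy ingredient explicit via polynomial density; the only real cost is the inductive bookkeeping you already flagged.
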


We define the final {\it central moment discrepancy regularizer} as an empirical estimate of the CMD metric.
Only the central moments that correspond to the marginal distributions are computed.
The number of central moments is limited by a new parameter $K$ and
the expectation is sampled by the empirical expectation.

\begin{defi}[CMD regularizer]
	Let $X$ and $Y$ be bounded random samples with respective probability distributions $p$ and $q$ on the interval $[a,b]^N$.
	The central moment discrepancy regularizer $\text{CMD}_K$ is defined as an empirical estimate of the $\text{CMD}$ metric, by
	\begin{equation}
	\label{eq:cmd}
	\text{CMD}_K(X,Y)=\frac{1}{|b-a|}\norm{\E(X)-\E(Y)}_2+\sum_{k=2}^{K}\frac{1}{|b-a|^k}\norm{C_k(X)-C_k(Y)}_2
	\end{equation}
	where $\E(X)=\frac{1}{|X|}\sum_{x\in X} x$ is the empirical expectation vector computed on the sample $X$ and $C_k(X)=\E((x-\E(X))^k)$ is the vector of all $k^\text{th}$ order sample central moments of the coordinates of $X$.
\end{defi}

This definition includes three approximation steps:
	(a) the computation of only marginal central moments,
	(b) the bound on the order of central moment terms via parameter $K$, and
	(c) the sampling of the probability distributions by the replacement of the expected value with the empirical expectation.

Applying approximation (a) and assuming independent marginal distributions, a zero CMD distance value still implies equal joint distributions (thm.~\ref{thm:metric}) but convergence in distribution (thm.~\ref{thm:optim}) applies only to the marginals.
In the case of dependent marginal distributions, zero CMD distance implies equal marginals and convergence in CMD implies convergence in distribution of the marginals.
However, the matching properties for the joint distributions are not obtained with dependent marginals and approximation (a).
The computational complexity is reduced to be linear w.\,r.\,t. the number of samples.

Concerning (b), proposition~\ref{prop} shows that the marginal distribution specific CMD terms have an upper bound that is strictly decreasing with increasing moment order.
This bound is convergent to zero.
That is, higher CMD terms can contribute less to the overall distance value.
This observation is experimentally strengthened in subsection {\it Parameter Sensitivity}.

\begin{prop}
	\label{prop}
	Let $X$ and $Y$ be bounded random vectors with respective probability distributions $p$ and $q$ on the compact interval $[a,b]^N$. Then
	\begin{equation}
	\label{eq:bound}
	\frac{1}{|b-a|^k}\norm{c_k(X)-c_k(Y)}_2\leq 2 \sqrt{N} \left(\frac{1}{k+1}\left(\frac{k}{k+1}\right)^k+\frac{1}{2^{1+k}}\right)
	\end{equation}
	where $c_k(X)=\EE((X-\EE(X))^k)$ is the vector of all $k^\text{th}$ order sample central moments of the marginal distributions of $p$.
\end{prop}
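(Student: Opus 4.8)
The plan is to bound each marginal coordinate's contribution separately and then combine them via the $\ell_2$-norm over the $N$ coordinates. Fix a coordinate $i$ and, after the affine change of variables $t = (x_i - a)/(b-a)$, assume without loss of generality that the corresponding marginal is supported on $[0,1]$; under this rescaling the quantity $\frac{1}{|b-a|^k}|c_{k,i}(X) - c_{k,i}(Y)|$ becomes $|\EE((U - \EE U)^k) - \EE((V - \EE V)^k)|$ for random variables $U, V$ on $[0,1]$. By the triangle inequality this is at most $|\EE((U-\EE U)^k)| + |\EE((V - \EE V)^k)|$, so it suffices to show that for any random variable $U$ on $[0,1]$ with mean $\mu$ we have $|\EE((U-\mu)^k)| \le \frac{1}{k+1}\left(\frac{k}{k+1}\right)^k + \frac{1}{2^{1+k}}$.

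The key step is the pointwise bound: for $u \in [0,1]$ and $\mu \in [0,1]$, split according to whether $k$ makes $(u-\mu)^k$ positive or negative. Write $|\EE((U-\mu)^k)| \le \EE(|U-\mu|^k) \le \max_{u \in [0,1]} |u - \mu|^k = \max(\mu, 1-\mu)^k$, but this is too crude (it gives $1$ when $\mu \to 0$). Instead I would keep the sign: $\EE((U-\mu)^k) = \EE((U-\mu)^k \mathbf{1}_{U \ge \mu}) + \EE((U-\mu)^k \mathbf{1}_{U < \mu})$. The first term is at most $(1-\mu)^k \Pr(U \ge \mu) \le (1-\mu)^k \cdot \frac{\mu}{\mu} $... more carefully, I would use that the function $h(u) = (u-\mu)^k$ on $[0,1]$ together with the mean constraint $\EE U = \mu$ makes this a moment problem: the extremal distributions are supported on at most two points, so one reduces to maximizing over two-point distributions $\{(\mu - s)\text{ w.p. } p,\ (\mu+t)\text{ w.p. } 1-p\}$ with $p s = (1-p) t$ (mean constraint), $s \le \mu$, $t \le 1-\mu$. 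On such a distribution $\EE((U-\mu)^k) = p(-s)^k + (1-p)t^k$. Handling even and odd $k$: for the positive part one optimizes $(1-p)t^k$ subject to $t \le 1-\mu$ and the mean constraint, giving the term $\frac{1}{k+1}\left(\frac{k}{k+1}\right)^k$ after a one-variable calculus optimization over $\mu$ (the worst $\mu$ is $\mu = \frac{1}{k+1}$); for the negative/remainder part one bounds the contribution from mass near the opposite end, which is maximized in the symmetric case $\mu = 1/2$ giving $\frac{1}{2^{1+k}}$. Finally, summing the squared per-coordinate bounds $\bigl(2(\tfrac{1}{k+1}(\tfrac{k}{k+1})^k + \tfrac{1}{2^{1+k}})\bigr)^2$ over $N$ coordinates and taking the square root yields the factor $2\sqrt{N}$.

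The main obstacle will be pinning down the extremal configuration in the moment problem and proving that the stated closed form is actually the maximum over all admissible $\mu$ and all distributions — i.e., justifying the reduction to two-point distributions and then carrying out the nested optimization (first over the distribution shape for fixed $\mu$, then over $\mu \in [0,1]$) cleanly, including the separate treatment of even versus odd $k$ for the sign of $(u-\mu)^k$. Everything else — the affine rescaling, the triangle inequality split into the two domains $U \ge \mu$ and $U < \mu$, and the final aggregation over coordinates — is routine.
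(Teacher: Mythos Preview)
Your overall skeleton matches the paper exactly: rescale each coordinate to $[0,1]$, apply the triangle inequality to split $|c_{k,i}(X)-c_{k,i}(Y)|$ into two single-distribution terms, bound each by a universal constant depending only on $k$, and then aggregate over the $N$ coordinates to pick up the $\sqrt N$. The paper, however, does \emph{not} attempt the extremal analysis you outline. Instead it makes one extra simplifying step and then cites a known result: it passes from the signed central moment to the absolute one via $|\EE((U-\mu)^k)|\le \EE(|U-\mu|^k)$, and then invokes \cite{egozcue2012smallest}, who show that for a random variable on $[0,1]$ with mean $\mu$ the sharp bound is $\EE|U-\mu|^k\le \mu(1-\mu)^k+(1-\mu)\mu^k$ (attained by the two-point law on $\{0,1\}$), and that the latter expression is in turn at most $\tfrac{1}{k+1}(\tfrac{k}{k+1})^k+\tfrac{1}{2^{1+k}}$. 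So what you flag as ``the main obstacle'' is precisely the content of the cited reference, not something the paper proves.

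If you want to rederive it yourself, your two-point reduction is the right instinct, but the sketch needs tightening. First, working with $\EE|U-\mu|^k$ rather than $\EE((U-\mu)^k)$ removes the even/odd case split entirely. Second, the extremal two-point law is the one supported on the \emph{endpoints} $\{0,1\}$ (not on $\{\mu-s,\mu+t\}$ in the interior), which gives the clean form $\mu(1-\mu)^k+(1-\mu)\mu^k$ directly. Third, your explanation of where the two summands in the final bound come from is not correct as stated: you cannot choose $\mu=\tfrac{1}{k+1}$ for one piece and $\mu=\tfrac12$ for the other, since $\mu$ is a single parameter. The displayed constant is a convenient (not sharp) upper bound on $\sup_{\mu\in[0,1]}\bigl[\mu(1-\mu)^k+(1-\mu)\mu^k\bigr]$ proved in the cited paper; deriving it requires a slightly more careful argument than term-wise maximisation.
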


Concerning approximation (c), the joint application of the weak law of large numbers~\citep{billingsley2008probability} with the continuous mapping theorem~\citep{billingsley2013convergence} proves that this approximation creates a consistent estimate.

We would like to underline that the training of neural networks with eq.~(\ref{eq:objective}) and the CMD regularizer in eq.~(\ref{eq:cmd}) can be easily realized by gradient descent algorithms. The gradients of the CMD regularizer are simple aggregations of derivatives of the standard functions $g_H$, $x^k$ and $\|.\|_2$.

\section{Experiments}

Our experimental evaluations are based on two benchmark datasets for domain adaptation, {\it Amazon reviews} and {\it Office}, described in subsection {\it Datasets}. The experimental setup is discussed in subsection {\it Experimental Setup} and our classification accuracy results are discussed in subsection {\it Results}. Subsection {\it Parameter Sensitivity} analysis the accuracy sensitivity w.\,r.\,t. parameter changes of $K$ for CMD and $\beta$ for MMD.

\subsection{Datasets}
\label{subsec:datasets}

{\bf Amazon reviews:}
For our first experiment we use the {\it Amazon reviews} data set with the same preprocessing as used by~\cite{chen2012marginalized,ganin2016domain,louizos2015variational}. The data set contains product reviews of four different product categories: books, DVDs, kitchen appliances and electronics. Reviews are encoded in 5000 dimensional feature vectors of bag-of-words unigrams and bigrams with binary labels: $0$ if the product is ranked by $1-3$ stars and $1$ if the product is ranked by $4$ or $5$ stars. From the four categories we obtain twelve domain adaptation tasks (each category serves once as source category and once as target category).

{\bf Office:}
The second experiment is based on the computer vision classification data set from~\cite{saenko2010adapting} with images from three distinct domains: {\it amazon} (A), {\it webcam} (W) and {\it dslr} (D). This data set is a {\it de facto} standard for domain adaptation algorithms in computer vision. Amazon, the largest domain, is a composition of $2817$ images and its corresponding $31$ classes. Following previous works we assess the performance of our method across all six possible transfer tasks.

\subsection{Experimental Setup}
\label{subsec:setup}

{\bf Amazon Reviews:}

For the Amazon reviews experiment, we use the same data splits as previous works for every task. Thus we have $2000$ labeled source examples and $2000$ unlabeled target examples for training, and between $3000$ and $6000$ examples for testing.

We use a similar architecture as~\cite{ganin2016domain} with one dense hidden layer with $50$ hidden nodes, sigmoid activation functions and softmax output function. Three neural networks are trained by means of eq.~(\ref{eq:objective}): (a) a base model without domain regularization ($\lambda=0$), (b) with the MMD as domain regularizer and (c) with CMD as domain regularizer. These models are additionally compared with the state-of-the-art models VFAE~\citep{louizos2015variational} and DANN~\citep{ganin2016domain}. The models (a),(b) and (c) are trained with similar setup as in~\cite{louizos2015variational} and~\cite{ganin2016domain}.

For the CMD regularizer, the $\lambda$ parameter of eq.~(\ref{eq:objective}) is set to $1$, i.e. the weighting parameter $\lambda$ is neglected.
The parameter $K$ is heuristically set to five, as the first five moments capture rich geometric information about the shape of a distribution and $K=5$ is small enough to be computationally efficient. However, the experiments in subsection {\it Parameter Sensitivity} show that similar results are obtained for $K\geq 3$.
%
%

For the MMD regularizer we use the Gaussian kernel with parameter $\beta$. We performed a hyper-parameter search for $\beta$ and $\lambda$, which has to be performed in an unsupervised way (no labels in the target domain). We use a variant of the {\it reverse cross-validation} approach proposed by~\cite{zhong2010cross}, in which we initialize the model weights of the reverse classifier by the weights of the first learned classifier (see~\cite{ganin2016domain} for details). Thereby, the parameter $\lambda$ is tuned on $10$ values between $0.1$ and $500$ on a logarithmic scale. The parameter $\beta$ is tuned on $10$ values between $0.01$ and $10$ on a logarithmic scale. Without this parameter search, no competitive prediction accuracy results could be obtained.

Since we have to deal with sparse data, we rely on the {\it Adagrad} optimizer~\citep{duchi2011adaptive}. For all evaluations, the default parametrization is used as implemented in {\it Keras}~\citep{chollet2015keras}. All evaluations are repeated $10$ times based on different shuffles of the data, and the mean accuracies and standard deviations are analyzed.

{\bf Office:}
Since the office dataset is rather small with only $2817$ images in its largest domain, we use the latent representations of the convolution neural network VGG16 of~\cite{simonyan2014very}. In particular we train a classifier with one hidden layer, $256$ hidden nodes and sigmoid activation function on top of the output of the first dense layer in the network. We again train one base model without domain regularization and a CMD regularized version with $K=5$ and $\lambda=1$.
%

We follow the standard training protocol for this data set and use all available source and target examples during training. Using this "fully-transductive" protocol, we compare our method with other state-of-the-art approaches including DLID~\citep{chopra2013dlid}, DDC~\citep{tzeng2014deep}, DAN~\citep{long2015learning}, Deep CORAL~\citep{sun2016deep}, and DANN~\citep{ganin2016domain}, based on fine-tuning of the baseline model AlexNet~\citep{krizhevsky2012imagenet}. We further compare our method to LSSA~\citep{aljundi2015landmarks}, CORAL~\citep{sun2016return}, and AdaBN~\citep{li2016revisiting}, based on the fine-tuning of InceptionBN~\citep{ioffe2015batch}.

As an alternative to Adagrad for non-sparse data, we use the {\it Adadelta} optimizer from~\cite{zeiler2012adadelta}. Again, the default parametrization from Keras is used. We handle unbalances between source and target sample by randomly down-sampling (up-sampling) the source sample. In addition, we ensure a sub-sampled source batch that is balanced w.\,r.\,t. the class labels.

Since all hyper-parameters are set a-priori, no hyper-parameter search has to be performed.

All experiments are repeated $10$ times with randomly shuffled data sets and random initializations.

\subsection{Results}
\label{subsec:results}

{\bf Amazon Reviews:}
Table~\ref{tab:amazon} shows the classification accuracies of four models: The {\it Source Only} model is the non domain regularized neural network trained with objective~(\ref{eq:obj_base}), and serves as a base model for the domain adaptation improvements. The models MMD and CMD are trained with the same architecture and objective~(\ref{eq:objective}) with $d$ as the domain regularizer MMD and CMD, respectively. VFAE refers to the Variational Fair Autoencoder of~\cite{louizos2015variational}, including a slightly modified version of the MMD regularizer for faster computations, and DANN refers to the domain-adversarial neural networks model of~\cite{ganin2016domain}. The last two columns are taken directly from these publications.

As one can observe in table~\ref{tab:amazon}, our accuracy of the CMD-based model is the highest in 9 out of 12 domain adaptation tasks, whereas on the remaining $3$ it is the second best method. However, the difference in accuracy compared to the best method is smaller than the standard deviation over all data shuffles.

\begin{table}[h]
	\centering
	\caption{Prediction accuracy $\pm$ standard deviation on the Amazon reviews dataset. The last two columns are taken directly from~\cite{louizos2015variational} and~\cite{ganin2016domain}.}
	\begin{tabular}{l|c|c|c|c|c}
		\hline
		Source$\rightarrow$Target		& Source Only	 & MMD			  & CMD			   			& VFAE	 		& DANN		\\
		\hline
		\rowcolor{Gray}
		books$\rightarrow$dvd			& $.787\pm.004$ & $.796\pm.008$ & ${\bf .805\pm.007}$	& $.799$ 		& $.784$ \\
		books$\rightarrow$electronics	& $.714\pm.009$ & $.758\pm.018$ & $.787\pm.007$ 		& ${\bf .792}$ 	& $.733$ \\
		\rowcolor{Gray}
		books$\rightarrow$kitchen		& $.745\pm.006$ & $.787\pm.019$ & $.813\pm.008$			& ${\bf .816}$ 	& $.779$ \\
		dvd$\rightarrow$books			& $.746\pm.019$ & $.780\pm.018$ & ${\bf .795\pm.005}$ 	& $.755$ 		& $.723$ \\
		\rowcolor{Gray}
		dvd$\rightarrow$electronics		& $.724\pm.011$ & $.766\pm.025$ & ${\bf .797\pm.010}$ 	& $.786$ 		& $.754$ \\
		dvd$\rightarrow$kitchen			& $.765\pm.012$ & $.796\pm.019$ & ${\bf .830\pm.012}$	& $.822$ 		& $.783$ \\
		\rowcolor{Gray}
		electronics$\rightarrow$books	& $.711\pm.006$ & $.733\pm.017$ & ${\bf .744\pm.008}$ 	& $.727$ 		& $.713$ \\
		electronics$\rightarrow$dvd		& $.719\pm.009$ & $.748\pm.013$ & $.763\pm.006$ 		& ${\bf .765}$ 	& $.738$ \\
		\rowcolor{Gray}
		electronics$\rightarrow$kitchen	& $.844\pm.005$ & $.857\pm.007$ & ${\bf .860\pm.004}$ 	& $.850$ 		& $.854$ \\
		kitchen$\rightarrow$books		& $.699\pm.014$ & $.740\pm.017$ & ${\bf .756\pm.006}$ 	& $.720$ 		& $.709$ \\
		\rowcolor{Gray}
		kitchen$\rightarrow$dvd			& $.734\pm.011$ & $.763\pm.011$ & ${\bf .775\pm.005}$ 	& $.733$ 		& $.740$ \\
		kitchen$\rightarrow$electronics	& $.833\pm.004$ & $.844\pm.007$ & ${\bf .854\pm.003}$	& $.838$ 		& $.843$ \\
		\hline
		\rowcolor{Gray}
		average							& $.752\pm.009$ &	$.781\pm.015$	& ${\bf .798\pm.007}$	& $.784$ &	$.763$
	\end{tabular}
	\label{tab:amazon}
\end{table}

{\bf Office:} Table~\ref{tab:office} shows the classification accuracy of different models trained on the Office dataset. Note that some of the methods (LSSA, CORAL and AdaBN) are evaluated based on the InceptionBN model, which shows higher accuracy than the base model (VGG16) of our method in most tasks.
However, our method outperforms related state-of-the-art methods on all except two tasks, on which it performs similar.
We improve the previous state-of-the-art method AdaBN~\citep{li2016revisiting} by more than $3.2\%$ in average accuracy.

\begin{table}[h]
	\centering
	\caption{Prediction accuracy $\pm$ standard deviation on the Office dataset. The first 10 rows are taken directly from the papers of~\cite{ganin2016domain} and~\cite{li2016revisiting}. The models DLID \textendash DANN are based on the AlexNet model, LSSA \textendash AdaBN are based on the InceptionBN model, and our method (CMD) is based on the VGG16 model.}
	\resizebox{\textwidth}{!}{%
	\begin{tabular}{l|c|c|c|c|c|c|c}
		\hline
		Method						& A$\rightarrow$W & D$\rightarrow$W & W$\rightarrow$D & A$\rightarrow$D & D$\rightarrow$A & W$\rightarrow$A & average\\
		\hline
		\rowcolor{Gray}
		AlexNet & $.616$ & $.954$ & $.990$ & $.638$ & $.511$ & $.498$ & $.701$\\
		DLID  & $.519$ & $.782$ & $.899$ & - & - & - & -\\
		\rowcolor{Gray}
		DDC	  & $.618$ & $.950$ & $.985$ & $.644$ & $.521$ & $.522$ & $.707$ \\
		Deep CORAL & $.664$ & $.957$ & $.992$ & $.668$ & $.528$ & $.515$ & $.721$ \\
		\rowcolor{Gray}
		DAN & $.685$ & $.960$ & $.990$ & $.670$ & $.540$ & $.531$ &  $.729$\\
		DANN & $.730$ & ${\bf .964}$ & $.992$ & - & - & - & - \\
		\hline
		\rowcolor{Gray}
		InceptionBN & $.703$ & $.943$ & ${\bf 1.00}$ & $.705$ & $.601$ & $.579$ & $.755$ \\
		LSSA & $.677$ & $.961$ & $.984$ & $.713$ & $.578$ & $.578$ & $.749$\\
		\rowcolor{Gray}
		CORAL & $.709$ & $.957$ & $.998$ & $.719$ & $.590$ & $.602$ & $.763$ \\
		AdaBN & $.742$ & $.957$ & $.998$ & $.731$ & $.598$ & $.574$ & $.767$ \\
		\hline
		\rowcolor{Gray}
		VGG16 & $.676\pm.006$ & $.961\pm.003$ & $.992\pm.002$ & $.739\pm.009$ & $.582\pm.005$ & $.578\pm.004$ & $.755$\\
		CMD							  & ${\bf .770\pm.006}$ & $.963\pm.004$ & $.992\pm.002$ & ${\bf .796\pm.006}$ & ${\bf .638\pm.007}$ & ${\bf .633\pm.006}$ & ${\bf .799}$
	\end{tabular}
	}
	\label{tab:office}
\end{table}

\subsection{Parameter Sensitivity}
\label{exp:param_sens}

The first sensitivity experiment aims at providing evidence regarding the accuracy sensitivity of the CMD regularizer w.\,r.\,t. parameter changes of $K$. That is, the contribution of higher terms in the CMD regularizer are analyzed.
The claim is that the accuracy of CMD-based networks does not depend strongly on the choice of $K$ in a range around its default value $5$.

In fig.~\ref{fig:psens} on the upper left we analyze the classification accuracy of a CMD-based network trained on all tasks of the Amazon reviews experiment.
We perform a grid search for the two regularization hyper-parameters $\lambda$ and $K$.
We empirically choose a representative stable region for each parameter, $[0.3,3]$ for $\lambda$ and $\{1,\ldots,7\}$ for $K$.
Since we want to analyze the sensitivity w.\,r.\,t. $K$, we averaged over the $\lambda$-dimension, resulting in one accuracy value per $K$ for each of the $12$ tasks.
Each accuracy is transformed into an accuracy ratio value by dividing it with the accuracy of $K=5$.
Thus, for each $K$ and task we get one value representing the ratio between the obtained accuracy (for this $K$ and task) and the accuracy of $K=5$.
The results are shown in fig.~\ref{fig:psens} (upper left).
The accuracy ratios between $K=5$ and $K\in\{3,4,6,7\}$ are lower than $0.5\%$, which underpins the claim that the accuracy of CMD-based networks does not depend strongly on the choice of $K$ in a range around its default value $5$.
For $K=1$ and $K=2$ higher ratio values are obtained. In addition, for these two values many tasks show worse accuracy than obtained by $K\in\{3,4,5,6,7\}$. From this we additionally conclude that higher values of $K$ are preferable to $K=1$ and $K=2$.

The same experimental procedure is performed with MMD regularization wighted by $\lambda\in[5,45]$ and Gaussian kernel parameter $\beta\in[0.3,1.7]$.
We calculate the ratio values w.\,r.\,t. the accuracy of $\beta=1.2$, since this value of $\beta$ shows the highest mean accuracy of all tasks.
Fig.~\ref{fig:psens} (upper right) shows the results.
It can be seen that the accuracy of the MMD network is more sensitive to parameter changes than the CMD regularized version.
Note that the problem of finding the best settings for the parameter $\beta$ of the Gaussian kernel is a well known problem~\citep{hsu2003practical}.

The default number of hidden nodes in all our experiments is $256$ because of the high classification accuracy of the networks without domain regularization (Source Only) on the source domains.
The question arises if the accuracy of the CMD is lower for higher numbers of hidden nodes.
That is, if the accuracy ratio between the accuracy, of the CMD regularized networks compared to the accuracy of the Source Only models, decreases with increasing hidden activation dimension.
In order to answer this question we calculate these ratio values for each task of the Amazon reviews data set for different number of hidden nodes ($128,256,384,\ldots,1664$).
For higher numbers of hidden nodes our Source Only models don't converge with the optimization settings under consideration.
For the parameters $\lambda$ and $K$ we use our default setting $\lambda=1$ and $K=5$.
Fig.~\ref{fig:psens} on the lower left shows the ratio values (vertical axis) for every number of hidden nodes (horizontal axis) and every task (colored lines).
It can be seen that the accuracy improvement of the CMD domain regularizer varies between $4\%$ and $6\%$.
However, no accuracy ratio decrease can be observed.

Please note that we use a default setting for $K$ and $\lambda$.
Thus, fig.~\ref{fig:psens} shows that our default setting ($\lambda=1, K=5$) can be used independently of the number of hidden nodes.
This is an additional result.

The same procedure is performed with the MMD weighted by parameter $\lambda=9$ and $\beta=1.2$ as these values show the highest classification accuracy for $256$ hidden nodes.
Fig.~\ref{fig:psens} on the lower right shows that the accuracy improvement using the MMD decreases with increasing number of hidden nodes for this parameter setting.
That is, for accurate performance of the MMD, additional parameter tuning procedures for $\lambda$ and $\beta$ need to be performed.
Note that the problem of finding the best setting for the parameter $\beta$ of the Gaussian kernel is a well known problem~\citep{hsu2003practical}.

\begin{figure}[h!]
	\centering
	\includegraphics[width=.8\textwidth]{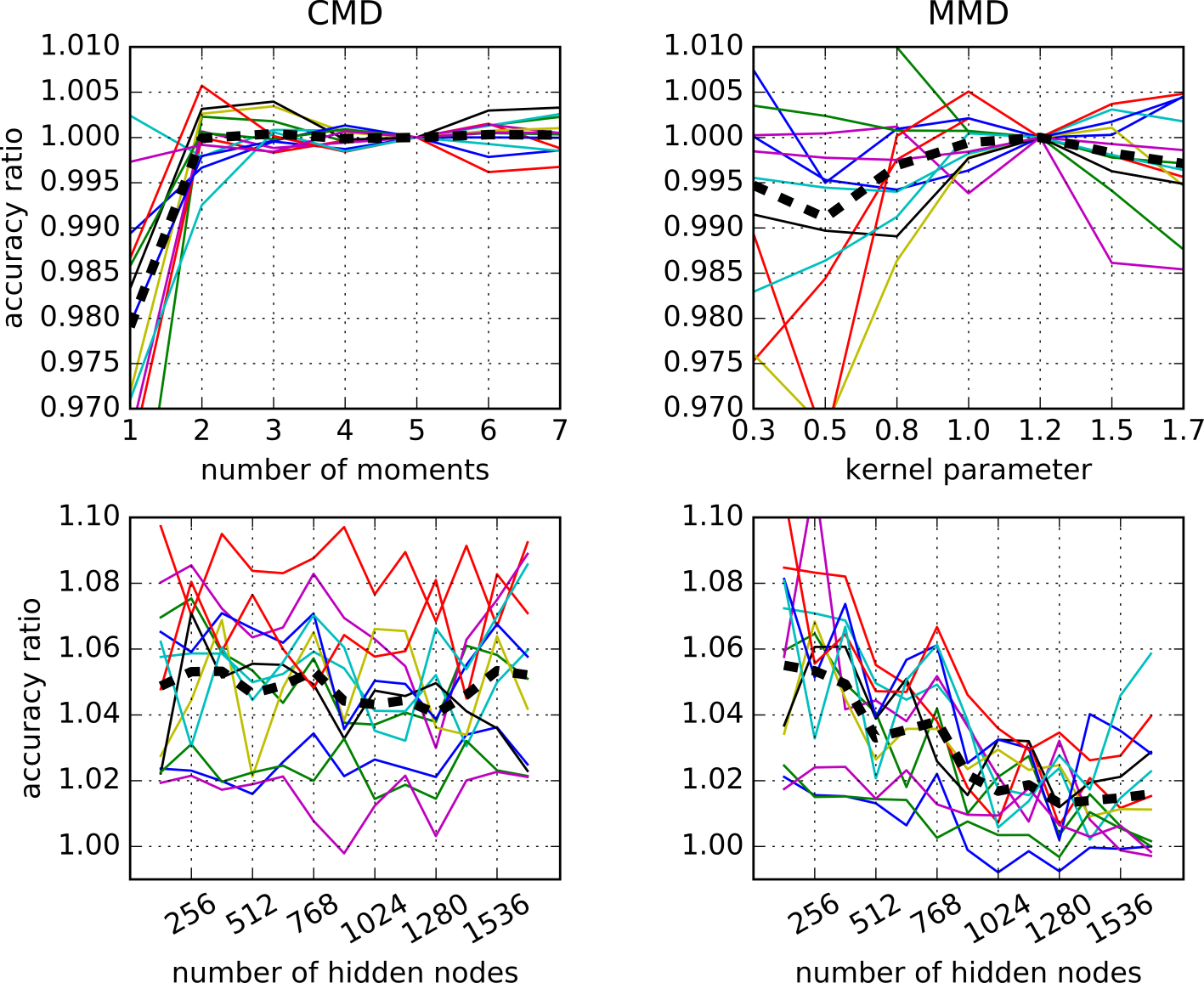}
	\caption[CMD accuracy sensitivity]{Sensitivity of classification accuracy w.\,r.\,t. different parameters of CMD (left) and MMD (right) on the Amazon reviews dataset. 
		                               The horizontal axes show parameter values and the vertical axes show accuracy ratio values.
		                               Each line represents accuracy ratio values for one specific task.
		                               The ratio values are computed w.\,r.\,t. the default accuracy for CMD (upper left), w.\,r.\,t. the best obtainable accuracy for MMD (upper right) and
		                               w.\,r.\,t. the non domain regularized network accuracies (lower left and lower right).}
	\label{fig:psens}
\end{figure}

\section{Conclusion and Outlook}

In this paper we proposed the central moment discrepancy (CMD) for domain-invariant representation learning, a distance function between probability distributions. Similar to other state-of-the-art approaches (MMD, KL-{\it divergence}, {\it Proxy }$\mathcal{A}$-{\it distance}), the CMD function can be used to minimize the domain discrepancy of latent feature representations. This is achieved by order-wise differences of central moments. By using probability theoretic analysis, we proved that CMD is a metric and that convergence in CMD implies convergence in distribution for probability distributions on compact intervals. Our method yields state-of-the-art performance on most tasks of the Office benchmark data set and outperforms Gaussian kernel based MMD, VFAE and DANN on most tasks of the Amazon reviews benchmark data set. These results are achieved with the default parameter setting of $K=5$. In addition, we experimentally underpinned the claim that the classification accuracy is not sensitive to the particular choice of $K$ for $K\geq 3$. Therefore, no computationally expensive hyper-parameter selection is required.

In our experimental analysis we compared our approach to different other state-of-the-art distribution matching methods like the Maximum Mean Discrepancy (MMD) based on the Gaussian kernel using a quadratic time estimate.
In the future we want to extend our experimental analysis to other MMD approaches including other kernels, parameter selection procedures and linear time estimators.
In addition, we plan to use the CMD for training generative models and to further investigate the approximation quality of the proposed empirical estimate.

\appendix
\section{Theorem Proofs}

\setcounter{thm}{0}
\setcounter{prop}{0}

\begin{thm}
	Let $p$ and $q$ be two probability distributions on a compact interval and let $\text{CMD}$ be defined as in~(\ref{eq:cmm}), then
	\begin{equation*}
	\text{CMD}(p,q)=0 ~~\Rightarrow~~ p=q
	\end{equation*}
\end{thm}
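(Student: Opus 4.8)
The plan is to deduce from $\text{CMD}(p,q)=0$ that $X$ and $Y$ have identical joint moments of all orders, and then to invoke determinacy of the moment problem on a compact set. Since $\text{CMD}(p,q)$ is a sum of nonnegative terms, each a Euclidean norm, $\text{CMD}(p,q)=0$ forces every summand to vanish: $\EE(X)=\EE(Y)$ and $c_k(X)=c_k(Y)$ for all $k\geq 2$. Reading off coordinates, this says $\EE(X_i)=\EE(Y_i)$ for every $i$, and $\EE\bigl(\prod_{i=1}^{N}(X_i-\EE(X_i))^{r_i}\bigr)=\EE\bigl(\prod_{i=1}^{N}(Y_i-\EE(Y_i))^{r_i}\bigr)$ for every multi-index $(r_1,\ldots,r_N)$.

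Next I would upgrade the equality of all central moments to the equality of all raw moments by induction on the total order $k=r_1+\cdots+r_N$. Expanding $\prod_{i=1}^{N}(X_i-\EE(X_i))^{r_i}$ by the multinomial theorem and taking expectations expresses the central moment of order $(r_1,\ldots,r_N)$ as the raw moment $\EE\bigl(\prod_i X_i^{r_i}\bigr)$ plus a linear combination of strictly lower-order raw moments, with coefficients that are polynomials in the means $\EE(X_i)$. Solving for the top-order raw moment and using the inductive hypothesis (means and all lower-order raw moments of $X$ and $Y$ already agree) together with equality of the corresponding central moments gives $\EE\bigl(\prod_i X_i^{r_i}\bigr)=\EE\bigl(\prod_i Y_i^{r_i}\bigr)$; the base cases $k=0$ and $k=1$ are immediate.

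Finally, with all joint moments of $X$ and $Y$ equal, I would conclude $p=q$ from the fact that a probability distribution supported on the compact set $[a,b]^N$ is uniquely determined by its moment sequence: $\int P\,dp=\int P\,dq$ for every polynomial $P$, polynomials are dense in $C([a,b]^N)$ with the supremum norm by the Stone--Weierstrass theorem, hence $\int f\,dp=\int f\,dq$ for all $f\in C([a,b]^N)$, and a finite Borel measure on a compact metric space is determined by its integrals against continuous functions. The main obstacle is the bookkeeping in the middle step: one must arrange the multi-index induction so that the passage between raw and central moments is genuinely triangular in total degree, so that the remainder terms really are of strictly smaller order and the induction closes. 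Compactness is essential in the last step --- it is exactly what makes the multivariate moment problem determinate and lets Stone--Weierstrass finish the argument; for distributions on unbounded domains the implication can fail.
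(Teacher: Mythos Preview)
Your proof is correct and reaches the same conclusion, but the route differs from the paper's in one notable way. The paper sidesteps your inductive conversion from central to raw moments entirely: it passes to the centered vectors $\hat X=X-\EE(X)$ and $\hat Y=Y-\EE(Y)$ and observes that the central moments of $X$ are \emph{by definition} the raw moments of $\hat X$, so equality of all $c_k$ immediately gives equality of all raw moments of $\hat X$ and $\hat Y$; equality of the centered distributions then follows, and the common mean $\EE(X)=\EE(Y)$ translates this back to $p=q$. This centering trick removes the multi-index bookkeeping you flagged as the main obstacle. For the determinacy step the paper appeals to the joint moment generating function (which exists and determines the law on a compact support) rather than Stone--Weierstrass; the two devices are interchangeable here. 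Your argument is more explicit about why compactness makes the moment problem determinate, while the paper's is shorter; both are sound.
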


\begin{proof}
	Let $X$ and $Y$ be two random vectors that have probability distributions $p$ and $q$, respectively.
	Let $\hat X=X-\EE(X)$ and $\hat Y=Y-\EE(Y)$ be the mean centered random variables. From $\text{CMD}(p,q)=0$ it follows that all moments of the bounded random variables $\hat X$ and $\hat Y$ are equal.
	Therefore, the joint moment generating functions of $\hat X$ and $\hat Y$ are equal.
	Using the property that $p$ and $q$ have compact support, we obtain the equality of the joint distribution functions of $\hat X$ and $\hat Y$.
	Since $\EE(X)=\EE(Y)$, it follows that $X=Y$.
\end{proof}

\begin{thm}
	Let $p_n$ and $p$ be probability distributions on a compact interval and let $\text{CMD}$ be defined as in~(\ref{eq:cmm}), then
	\begin{equation*}
	\text{CMD}(p_n,p)\rightarrow 0~~\Rightarrow~~ p_n\xrightarrow{d} p
	\end{equation*}
	where $\xrightarrow{d}$ denotes convergence in distribution.
\end{thm}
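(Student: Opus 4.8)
\section*{Proof proposal}

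The plan is to reduce the statement to the classical method of moments on a compact set. First I would observe that since $\text{CMD}(p_n,p)$ is a sum of nonnegative terms, $\text{CMD}(p_n,p)\to 0$ forces each term to vanish: writing $X_n\sim p_n$ and $X\sim p$, we get $\norm{\EE(X_n)-\EE(X)}_2\to 0$ and $\norm{c_k(X_n)-c_k(X)}_2\to 0$ for every fixed $k\geq 2$. Because each $c_k$ is a fixed finite-dimensional vector, this yields componentwise convergence of every joint central moment $\EE\big(\prod_i (X_{n,i}-\EE(X_{n,i}))^{r_i}\big)$ to the corresponding central moment of $X$. Expanding $\prod_i X_{n,i}^{r_i}=\prod_i\big((X_{n,i}-\EE(X_{n,i}))+\EE(X_{n,i})\big)^{r_i}$ by the binomial theorem shows that each joint raw moment $\EE\big(\prod_i X_{n,i}^{r_i}\big)$ is a polynomial in the coordinates of $\EE(X_n)$ and in the central moments of order at most $\sum_i r_i$; since all of these converge, every joint raw moment of $p_n$ converges to that of $p$.

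Next I would pass from moment convergence to convergence in distribution, exploiting that everything is supported on the fixed compact set $[a,b]^N$. Given $f\in C([a,b]^N)$ and $\varepsilon>0$, the Stone--Weierstrass theorem supplies a polynomial $P$ with $\sup_{[a,b]^N}|f-P|<\varepsilon$, so that
\[
	\Big|\textstyle\int f\,dp_n-\int f\,dp\Big|\ \leq\ 2\varepsilon+\Big|\textstyle\int P\,dp_n-\int P\,dp\Big| .
\]
The last term is a finite linear combination of differences of joint raw moments, hence tends to $0$; letting $n\to\infty$ and then $\varepsilon\to 0$ gives $\int f\,dp_n\to\int f\,dp$ for all bounded continuous $f$, which is precisely $p_n\xrightarrow{d}p$.

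The only point requiring care, i.e.\ the main obstacle, is the moment-to-distribution step: one must ensure the polynomial approximation is uniform on the common compact support so that the $2\varepsilon$ bound holds simultaneously for all $n$. The Stone--Weierstrass route above avoids explicitly invoking determinacy of the Hausdorff moment problem; an alternative would be to note that $(p_n)$ is automatically tight (all mass in $[a,b]^N$), extract by Prokhorov a weakly convergent subsequence with limit $\mu$, observe $\mu$ has the same moments as $p$, and conclude $\mu=p$ by uniqueness of moments on a bounded interval, so that the full sequence converges to $p$. The remaining ingredients --- the binomial identity relating raw and central moments and the elementary fact that norm convergence of a fixed-length vector is componentwise --- are routine.
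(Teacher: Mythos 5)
Your proof is correct, but it follows a different route than the paper's. The paper works with the mean-centered variables $\hat X_n=X_n-\EE(X_n)$ and $\hat X=X-\EE(X)$: from $\text{CMD}(p_n,p)\to 0$ it deduces convergence of all central moments, concludes that the joint moment generating functions of $\hat X_n$ converge to that of $\hat X$ (legitimate because the supports are uniformly bounded, so the MGFs are given by their moment series), invokes the classical result that pointwise convergence of MGFs implies convergence in distribution, and finally restores the means via $\EE(X_n)\to\EE(X)$ (a Slutsky-type step). You instead translate central-moment plus mean convergence into convergence of all joint \emph{raw} moments by the binomial expansion, and then run the method of moments directly on the common compact support, using Stone--Weierstrass to pass from polynomial test functions to all continuous ones (or, in your alternative, tightness plus determinacy of the Hausdorff moment problem). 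Your route is more self-contained --- it avoids citing an MGF-convergence theorem and handles the mean shift implicitly inside the raw-moment conversion --- and the uniform polynomial approximation over the fixed compact set, which you correctly flag as the one delicate point, is exactly what makes it airtight. The paper's route is shorter once the MGF convergence theorem is granted. Both arguments are valid; yours trades brevity for elementarity.
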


\begin{proof}
	Let $X_n$ and $X$ be random vectors that have probability distributions $p_n$ and $p$ respectively.
	Let $\hat X=X-\EE(X)$ and $\hat X_n=X_n-\EE(X_n)$ be the mean centered random variables. From $\text{CMD}(X_n,X)\rightarrow 0$ it follows that the moments of $\hat X_n$ converge to the moments of $\hat X$.
	Therefore, the joint moment generating functions of $\hat X_n$ converge to the joint moment generating function of $\hat X$, which implies convergence in distribution of the mean centered random variables.
	Using $\EE(X_n)\rightarrow \EE(X)$ we obtain $p_n\xrightarrow{d} p$.
\end{proof}

\begin{prop}
	Let $X$ and $Y$ be bounded random vectors with respective probability distributions $p$ and $q$ on the compact interval $[a,b]^N$. Then
	\begin{equation}
	\frac{1}{|b-a|^k}\norm{c_k(X)-c_k(Y)}_2\leq 2 \sqrt{N} \left(\frac{1}{k+1}\left(\frac{k}{k+1}\right)^k+\frac{1}{2^{1+k}}\right)
	\end{equation}
	where $c_k(X)=\EE((X-\EE(X))^k)$ is the vector of all $k^\text{th}$ order sample central moments of the marginal distributions of $p$.
\end{prop}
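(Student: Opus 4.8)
The plan is to reduce the stated vector inequality to a one-dimensional estimate for a single distribution. First I would use the triangle inequality $\norm{c_k(X)-c_k(Y)}_2\le\norm{c_k(X)}_2+\norm{c_k(Y)}_2$ together with $\norm{v}_2\le\sqrt N\,\norm{v}_\infty$ for $v\in\mathbb{R}^N$, which reduces the task to bounding a single $k$-th central moment of a single marginal. After the affine change of variables $w=(x-a)/(b-a)$, which maps $[a,b]$ to $[0,1]$ and rescales the $k$-th central moment by $(b-a)^k$, it then suffices to prove that any random variable $W$ on $[0,1]$ with mean $\nu=\EE(W)$ satisfies $|\EE((W-\nu)^k)|\le\frac{1}{k+1}(\frac{k}{k+1})^k+\frac{1}{2^{k+1}}$. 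Multiplying this coordinatewise bound by $\sqrt N$ and $(b-a)^k$, adding the two copies coming from $X$ and $Y$, and dividing by $|b-a|^k$ yields the asserted bound. The boundary cases $\nu\in\{0,1\}$ are trivial (then $W$ is a.s.\ constant), so I may assume $\nu\in(0,1)$.

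For the coordinatewise estimate I would split $W-\nu$ into its positive and negative parts, writing $z_+=\max(z,0)$ and $z_-=\max(-z,0)$. Since $\EE((W-\nu)^k)=\EE((W-\nu)_+^k)+(-1)^k\EE((W-\nu)_-^k)$ and both expectations are nonnegative, one gets $|\EE((W-\nu)^k)|\le\EE((W-\nu)_+^k)+\EE((W-\nu)_-^k)$ irrespective of the parity of $k$. On the event $\{W\ge\nu\}$ one has $0\le W-\nu\le 1-\nu$, hence $(W-\nu)^k\le(1-\nu)^{k-1}(W-\nu)$, so $\EE((W-\nu)_+^k)\le(1-\nu)^{k-1}\delta$ with $\delta:=\EE((W-\nu)_+)=\EE((W-\nu)_-)$; symmetrically $\EE((W-\nu)_-^k)\le\nu^{k-1}\delta$. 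The sharp input here is $\delta\le\nu(1-\nu)$, which I would obtain from $\delta\le(1-\nu)\Pr(W\ge\nu)$ and $\delta\le\nu\Pr(W<\nu)$ by dividing these by $1-\nu$ and $\nu$ respectively and adding, using $\Pr(W\ge\nu)+\Pr(W<\nu)=1$. Combining gives $\EE((W-\nu)_+^k)\le\nu(1-\nu)^k$ and $\EE((W-\nu)_-^k)\le\nu^k(1-\nu)$.

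It remains to maximize $\nu(1-\nu)^k+\nu^k(1-\nu)$ over $\nu\in(0,1)$. Because this expression is invariant under $\nu\mapsto1-\nu$, I would assume $\nu\le\frac12$. For the first term, a one-line calculus computation shows $\max_{x\in[0,1]}x(1-x)^k=\frac{1}{k+1}(\frac{k}{k+1})^k$, attained at $x=1/(k+1)$. For the second term, $x\mapsto x^k(1-x)$ is nondecreasing on $[0,\frac{k}{k+1}]$, and $\frac12\le\frac{k}{k+1}$ for $k\ge1$, so $\nu^k(1-\nu)\le(\frac12)^k\cdot\frac12=\frac{1}{2^{k+1}}$. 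Summing the two bounds completes the coordinatewise estimate, and hence the proposition.

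The genuinely routine pieces are the two scalar optimizations and the rescaling bookkeeping; the part that needs the right idea is producing the two-summand shape of the bound. The point is that only one side of the excursion of $W-\nu$ can exploit the full length $1-\nu$ of the interval, while the other side is controlled by $\nu\le\frac12$ and therefore adds only the geometrically small term $\frac{1}{2^{k+1}}$. Making this precise hinges on the two linearizations of $(W-\nu)^k$ on the two halves together with the sharp mean-absolute-deviation bound $\delta\le\nu(1-\nu)$, which is what makes the constants come out exactly as stated.
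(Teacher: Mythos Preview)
Your proof is correct and follows the same overall route as the paper: apply the triangle inequality to separate $X$ and $Y$, pass from the $\ell_2$-norm of an $N$-vector to $\sqrt{N}$ times a coordinatewise supremum, rescale to $[0,1]$, and then bound the $k$-th absolute central moment of a single scalar random variable on $[0,1]$.

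The difference is in how the scalar step is handled. The paper does not argue this step; it simply invokes \cite{egozcue2012smallest} twice, first for the sharp bound
\[
\sup_{W\in\mathcal{X}([0,1])}\EE\bigl(|W-\EE(W)|^k\bigr)=\sup_{x\in[0,1]}\bigl(x(1-x)^k+(1-x)x^k\bigr),
\]
and then for the explicit numerical upper bound $\tfrac{1}{k+1}(\tfrac{k}{k+1})^k+\tfrac{1}{2^{k+1}}$. You instead give a self-contained derivation: the positive/negative-part decomposition together with the linearization $(W-\nu)_\pm^k\le(1-\nu)^{k-1}(W-\nu)_\pm$ (resp.\ $\nu^{k-1}(W-\nu)_\pm$) and the mean-absolute-deviation bound $\delta\le\nu(1-\nu)$ reproduce exactly the two-point expression $\nu(1-\nu)^k+\nu^k(1-\nu)$, which is the value of $\EE(|W-\nu|^k)$ for the Bernoulli law with mean $\nu$. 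Your subsequent optimization, using the symmetry $\nu\mapsto1-\nu$ to restrict to $\nu\le\tfrac12$ and then bounding the two summands separately, yields the same explicit constant. So your argument is essentially an in-line proof of the cited lemma; it buys self-containment at the price of a bit more length, while the paper's version is shorter but opaque without the reference.
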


\begin{proof}
	Let $\mathcal{X}([a,b])$ be the set of all random variables with values in $[a,b]$. Then it follows that
	\begin{align*}
	\frac{1}{|b-a|^k}\norm{c_k(X)-c_k(Y)}_2&= \norm{\frac{c_k(X)}{|b-a|^k}-\frac{c_k(Y)}{|b-a|^k}}_2\\
										   &\leq \norm{\frac{c_k(X)}{|b-a|^k}}_2+\norm{\frac{c_k(Y)}{|b-a|^k}}_2\\
										   &= \norm{\EE\left(\left(\frac{X-\EE(X)}{|b-a|}\right)^k\right)}_2
										   +\norm{\EE\left(\left(\frac{Y-\EE(Y)}{|b-a|}\right)^k\right)}_2\\
										   &\leq \norm{\EE\left(\left|\frac{X-\EE(X)}{b-a}\right|^k\right)}_2
										   +\norm{\EE\left(\left|\frac{Y-\EE(Y)}{b-a}\right|^k\right)}_2\\
										   &\leq 2\sqrt{N} \sup\limits_{X\in \mathcal{X}([a,b])} \EE\left(\left|\frac{X-\EE(X)}{b-a}\right|^k\right)
	\end{align*}
	The latter term refers to the absolute central moment of order $k$, for which the smallest upper bound is known~\citep{egozcue2012smallest}:
	\begin{align*}
	\frac{1}{|b-a|^k}\norm{c_k(X)-c_k(Y)}_2&\leq 2\sqrt{N} \sup\limits_{x\in [0,1]} x (1-x)^k + (1-x) x^k
	\end{align*}
	\cite{egozcue2012smallest} also give a more explicit bound:
	\begin{align*}
	\frac{1}{|b-a|^k}\norm{c_k(X)-c_k(Y)}_2	&\leq 2\sqrt{N} \left(\frac{1}{k+1}\left(\frac{k}{k+1}\right)^k+\frac{1}{2^{1+k}}\right)
	\end{align*}
\end{proof}
\subsubsection*{Acknowledgements}

The research reported in this paper has been supported by the Austrian Ministry for Transport, Innovation and Technology, the Federal Ministry of Science, Research and Economy, and the Province of Upper Austria in the frame of the COMET center SCCH.

We would like to thank Bernhard Moser and Florian Sobieczky for fruitful discussions on metric spaces.

\bibliography{final}
\bibliographystyle{iclr2017_conference}

\end{document}